\documentclass[a4paper,conference]{IEEEtran}
\IEEEoverridecommandlockouts
\usepackage{cite}
\usepackage{amsmath,amssymb,amsfonts}
\usepackage{amsthm}
\usepackage{algorithmic}
\usepackage{colortbl}
\definecolor{secgray}{gray}{0.93}
\usepackage{booktabs}
\usepackage{tabularx}
\usepackage{graphicx}
\usepackage{textcomp}
\usepackage{algorithm}
\usepackage{xcolor}
\usepackage{bm}
\usepackage[normalem]{ulem}
\definecolor{negred}{RGB}{180,30,30}

\newtheorem{proposition}{Proposition}
\newtheorem{remark}{Remark}

\def\BibTeX{{\rm B\kern-.05em{\sc i\kern-.025em b}\kern-.08em
    T\kern-.1667em\lower.7ex\hbox{E}\kern-.125emX}}

\definecolor{darkgreen}{RGB}{0,100,0}

\newcommand{\Wspace}{\mathcal{X}^{o}}

\newcommand{\wscen}[1]{x^{o,#1}_{0:H}}
\newcommand{\Wtraj}{\boldsymbol{X}^{o}_{0:H}}
\newcommand{\zij}[2]{z^{#1}_{#2}}
\newcommand{\rhobar}[1]{\bar{\rho}_{#1}}

\begin{document}

\title{Autonomous Driving with Priority-Ordered STL Specifications Under Multimodal Uncertainty}

\author{Taha Bouzid, Shuhao Qi, Mircea Lazar, and Sofie Haesaert}

\maketitle
\begingroup
\renewcommand\thefootnote{}
\footnote{This research has received funding from the European Union under the
Horizon Europe Grant Agreement AIGGREGATE, no. 101202457. All
authors are with the Control Systems Group, Department of Electrical
Engineering, Eindhoven University of Technology, Eindhoven, The Netherlands.
Email: \texttt{\{t.bouzid, s.qi, m.lazar, s.haesaert\}@tue.nl}}
\endgroup

\begin{abstract}
Autonomous vehicles must plan trajectories that satisfy multiple requirements,
such as safety, traffic-rule compliance, and passenger comfort.
However, in safety-critical scenarios, it is not always possible to satisfy all requirements simultaneously, necessitating their prioritization based on importance. At the same time, the uncertainty in the predicted trajectories of surrounding road users, such as other vehicles and pedestrians, must be explicitly accounted for. In this work, we propose an uncertainty-aware trajectory planning framework that incorporates a predefined  priority ordering over Signal Temporal Logic (STL) specifications and preserves the induced lexicographic ordering under multimodal uncertainty. We implement this formulation with Model Predictive Path Integral (MPPI) control and demonstrate the effectiveness of our method on simulation scenarios, showing that our framework efficiently handles conflicting objectives under realistic multimodal uncertainty.
\end{abstract}

\section{Introduction}
Autonomous vehicles (AVs) must make real-time planning decisions based on uncertain predictions of the motion of other road users. These predictions are often multimodal \cite{cui2019multimodal, qi2025situation}. A scenario with such a multimodal prediction is given in Fig.~\ref{fig:highway_scen}: the green car needs to account for two possible predicted motions of the blue car, each with a different
probability. 
At the same time, its planned trajectory must satisfy multiple criteria, such as safety requirements, traffic-rule compliance, and passenger comfort~\cite{9fecae2207c84bcaaaad1358bee7ae55}. In realistic situations, these objectives cannot always be satisfied simultaneously across all modes. Consider the scenario in Fig.~\ref{fig:highway_scen}: the ego vehicle must plan under two equally likely future modes of the other agent, namely keeping its lane or cutting in, with only two candidate trajectories available. Trajectory $T_2$ satisfies both the safety and the passenger-comfort rules in the lane-keeping mode but risks a collision in the cut-in mode, whereas $T_1$ satisfies the safety rule in both modes at the cost of violating the  passenger-comfort rule in each. In such safety-critical scenarios, $T_1$ is clearly preferable, since it guarantees satisfaction of the highest-priority safety rule across all possible futures. This illustrates that, under multimodal uncertainty, it is essential to balance multiple objectives according to their priority order.
\begin{figure}[htbp]
	\centering
        \includegraphics[width=\linewidth]{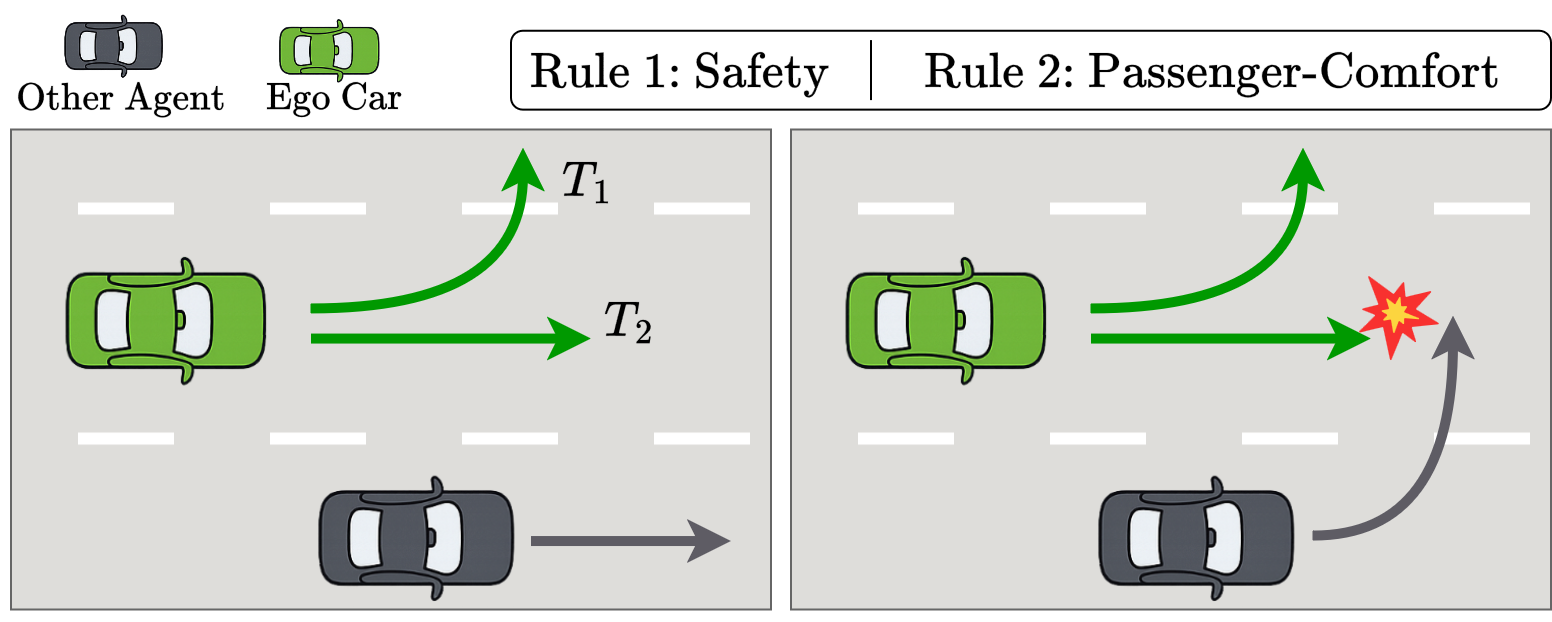}
\caption{Illustrative trajectory-planning scenario under multimodal uncertainty in the other agent's future motion, with its lane-keeping mode shown on the left and cut-in mode on the right. The ego vehicle considers two candidate trajectories, $T_1$ and $T_2$, and must select the one that respects the predefined rule ordering under all modes.\vspace{-5.5mm}}
        \label{fig:highway_scen}
\end{figure}

To verify whether a trajectory satisfies its planning objectives, such objectives are generally translated into formal specifications encoding safety and 
traffic rules using signal temporal logic 
(STL)~\cite{maler2004monitoring,donze2010robust}, as 
in~\cite{maierhofer2020interstate,maierhofer2022intersection,
9fecae2207c84bcaaaad1358bee7ae55}, or with linear temporal logic 
(LTL)~\cite{risk_Qi,Intention-Aware}.
STL in particular offers the advantage of robustness 
semantics~\cite{donze2010robust}, enabling the reformulation of the 
constrained planning problem as an optimization problem that maximizes 
the degree of satisfaction or violation of the specifications.
For deterministic planning problems, the strict priority among traffic rules is encoded in~\cite{censi2019rulebooks} as a lexicographic hierarchy, in which any violation of a higher-priority STL rule dominates any combination of satisfactions and violations at lower priority. Building on this formulation, an equivalent differentiable rank-preserving reward that captures the same hierarchy is constructed in~\cite{veer2023receding} and embedded within a gradient-based receding-horizon planner.
%
Halder and Althoff~\cite{halder2023lexicographic} embedded this priority
structure into a lexicographic mixed-integer program over STL
constraints, and their follow-up work~\cite{halder2025preordered}
proposed a sampling-based planner with preordered objectives that avoids
scalarization.
 However, these methods address priority-ordered rule satisfaction only in a deterministic setting, which is inadequate when the motion of surrounding objects is uncertain. Under multimodal traffic predictions, rule satisfaction is inherently combinatorial: a trajectory may satisfy the highest-priority rule under one predicted mode of opponents but violate it under another predicted mode. Closing this gap therefore requires reasoning about rule satisfaction over the full distribution of future traffic evolution.
 Furthermore, to capture uncertainty arising from complex tasks such as those involving human road users, risk-aware formulations have also been proposed. In~\cite{risk_Qi}, a risk metric for LTL specifications is introduced that integrates the severity and timing of uncertain events. In~\cite{lindemann2022riskstl}, risk metrics such as the Conditional Value-at-Risk (CVaR) are applied to STL specifications to capture the expected severity of rule violations in the tail of the loss distribution. However, none of these works incorporates a strict lexicographic priority ordering over STL specifications.



We present a novel trajectory-planning framework that accounts for priority-ordered STL rules under multimodal uncertainty in the motion of surrounding objects. The resulting non-smooth, non-convex planning problem is solved using
a Model Predictive Path Integral (MPPI)-based receding-horizon planner. In detail, our contributions are:
\begin{itemize}
    \item A stochastic extension of lexicographically ordered STL planning using risk-aware robustness satisfaction that preserves strict priority relations under uncertainty.
    \item An MPPI-based planner for optimizing non-smooth ordered STL objectives in multimodal traffic scenarios.
\end{itemize}

\section{Preliminaries and Problem Statement}

\subsection{System Definition}
We refer to the AV under control as the \emph{ego vehicle}, whose state and control input at step $k$ are denoted by $x^e_k \!\in\! \mathcal{X} \!\subseteq\! \mathbb{R}^n$ and $u_k \!\in\! \mathcal{U} \!\subseteq\! \mathbb{R}^m$, respectively. The ego dynamics evolve according to the deterministic, discrete-time nonlinear system
\begin{equation}\label{eq:ego_dynamics}
x^e_{k+1} = f(x^e_k, u_k).
\end{equation}
For an admissible initial state $x^e_0$ and input sequence $u_{0:H-1}\!:=\!(u_0,\ldots,u_{H-1})$ over the planning horizon $H$, the resulting state trajectory is $x^e_{0:H}\!:=\!(x^e_0,\ldots,x^e_H)$. Analogously, let $x^{o}_k \!\in\! \Wspace \!\subseteq\! \mathbb{R}^q$ denote the state of a surrounding object at step $k$.
Since the future motion of the surrounding object is stochastic, we model it
over a probability space $(\Omega, \mathcal{A}, \mathbb{P})$, where $\Omega$ is
the sample space, $\mathcal{A}$ is a $\sigma$-algebra of measurable events, and
$\mathbb{P} \colon \mathcal{A} \!\to\! [0,1]$ is a probability measure. Specifically,
its future trajectory over the planning horizon is modeled as a random trajectory
variable $\Wtraj : \Omega \!\to\! (\Wspace)^{H+1}$, where each realization
$\Wtraj(\omega) \!=\! (x^{o}_0(\omega), x^{o}_1(\omega), \dots, x^{o}_H(\omega))$,
$\omega \!\in\! \Omega$, corresponds to one possible future trajectory of the
surrounding object. For notational clarity, we present the formulation for a single surrounding object; it extends easily to the multi-object case. The state of the scenario is composed of the state of the ego vehicle and that of the surrounding object, and is denoted as
$s_k\!:=\!(x_k^e,x_k^o)\!\in\!\mathbb{R}^{n+q}$ at discrete step $k$.
Similarly, we can define a full realization of the scenario over the horizon $\{0,1,\dots,H\}$ by $s_{0:H}\!:=\!(s_0,s_1,\dots,s_H)$. Note that this realization is therefore also implicitly a function of $\omega$.
In this work, we consider \emph{multimodal} stochastic uncertainty over the future motion of the surrounding object, which is therefore more general than the unimodal assumption adopted in prior work~\cite{engelaar2024riskmpc}.

\subsection{Signal Temporal Logic (STL)}
\label{subsec:stl}
In this work, we
consider a future-time, discrete-time fragment of STL~\cite{maler2004monitoring}. 
We consider time intervals $I\!\subseteq\!\mathbb{Z}_{\!\ge\! 0}$. For integers
$a,b\!\in\!\mathbb{Z}_{\!\ge\! 0}$ with $b\!\ge\! a$, let
$I\!:=\!\{a,a+1,\dots,b\}$; then we define
$k+I\!:=\!\{k+a,k+a+1,\dots,k+b\}$ for $k\!\in\! \mathbb{Z}_{\!\ge\! 0}$. The STL syntax 
over the
signal $s:\mathbb{Z}_{\!\ge\! 0}\!\rightarrow\! \mathbb R^{n+q}$ 
is defined as follows:
\begin{equation}
\varphi:=\top\mid\mu\mid\neg\varphi\mid\varphi_1\wedge\varphi_2\mid\varphi_1 \mathsf{U}_I\varphi_2,
\end{equation}
where $\mu\!:=\!(l(s_k)\!\ge\! 0)$ is a predicate with $l:\mathbb{R}^{n+q}\!\to\!\mathbb{R}$,
and $\neg$, $\wedge$, and $\mathsf{U}_I$ denote negation, conjunction, and bounded
until, respectively. The derived operators are $\varphi_1\vee\varphi_2
\!:=\!\neg(\neg\varphi_1\wedge\neg\varphi_2)$ (\emph{disjunction}),
$F_I\varphi\!:=\!\top \mathsf{U}_I\varphi$ (\emph{eventually}), and
$G_I\varphi\!:=\!\neg F_I\neg\varphi$ (\emph{always}).
Satisfaction of $\varphi$ by $s$ at step $k$ is denoted by $s\models_k\varphi$.
The quantitative semantics of STL based on their
robustness~\cite{donze2010robust} are recursively defined by
\begin{equation}
\begin{aligned}
\rho(\top,s,k)      &:= +\infty,\\
\rho(\mu,s,k)       &:= l(s_k),\\
\rho(\neg\varphi,s,k)               &:= -\rho(\varphi,s,k),\\
\rho(\varphi_1\wedge\varphi_2,s,k) &:= \min\bigl(\rho(\varphi_1,s,k),\,\rho(\varphi_2,s,k)\bigr),\\
\rho(\varphi_1\vee\varphi_2,s,k)   &:= \max\bigl(\rho(\varphi_1,s,k),\,\rho(\varphi_2,s,k)\bigr),\\[2pt]
\rho(\varphi_1 \mathsf{U}_I \varphi_2,s,k) &:= \max_{k'\in k+I}\min\bigl(
                                         \rho(\varphi_2,s,k'),\\
    &\quad \hphantom{:={}}
                                         \min_{k''\in[k,k')}
                                         \rho(\varphi_1,s,k'')\bigr),\\[2pt]
\rho(G_I\varphi,s,k)               &:= \min_{k'\in k+I}\rho(\varphi,s,k'),\\
\rho(F_I\varphi,s,k)               &:= \max_{k'\in k+I}\rho(\varphi,s,k').
\end{aligned}
\label{eq:stl_robustness_semantics}
\end{equation}

Based on the STL robustness~\cite{donze2010robust}, we say that the signal $s$ satisfies $\varphi$, that is, $s\vDash\varphi$, if $\rho(\varphi,s,0)>0$ and $s\not\vDash\varphi$ if $\rho(\varphi,s,0)<0$.   

Since $\varphi$ is bounded-time, the robustness $\rho(\varphi,s,0)$
depends only on the finite prefix $s_{0:H}$ whenever
$H \!\ge\! \mathrm{hrz}(\varphi)$, where $\mathrm{hrz}(\varphi)$ is the
horizon of $\varphi$, i.e., the minimum signal length required to evaluate
its satisfaction~\cite{lindemann2022riskstl}.
We therefore work with the truncated signal $s_{0:H}$ in what follows.

\subsection{Conditional Value-at-Risk (CVaR)}
\label{subsec:risk_metrics}

The robustness defined in~\eqref{eq:stl_robustness_semantics} depends on
the composed signal $s_{0:H}$.
Since $x^e_{0:H}$ is a deterministic decision variable while $\boldsymbol{X}^o_{0:H}$
is a random variable, the composed signal $s_{0:H}$ is itself a random variable
for any fixed ego trajectory, with realization
$s_{0:H}(\omega) \!=\! \bigl((x^e_0, x^o_0(\omega)),\dots,(x^e_H, x^o_H(\omega))\bigr)$
with $\omega \!\in\! \Omega$.
Following~\cite{lindemann2022riskstl}, we define the \emph{loss random variable} associated with specification $\varphi$ and ego trajectory $x^e_{0:H}$ as the negated STL robustness:
\begin{equation}
  \boldsymbol{Z}_{\varphi}(x^e_{0:H}) \colon \Omega \to \mathbb{R}, 
  \label{eq:loss_rv}
\end{equation}
with $\boldsymbol{Z}_{\varphi}(x^e_{0:H})(\omega) \!:=\! -\rho(\varphi,\, s_{0:H}(\omega),\, 0)$ for $\omega \!\in\! \Omega$. Since $\rho$ is positive under satisfaction and negative under violation, large positive values of $\boldsymbol{Z}_{\varphi}(x^e_{0:H})$ correspond to severe rule violations.
Evaluating $\boldsymbol{Z}_{\varphi}(x^e_{0:H})$ in expectation averages over all outcomes 
and can therefore mask rare but severe violations that are decisive in safety-critical 
planning~\cite{majumdar2020risk, rockafellar2000cvar}. To explicitly account for such tail events, we instead adopt a \emph{risk metric} $\mathcal{R}$, a mapping from real-valued random variables on $(\Omega, \mathcal{A}, \mathbb{P})$ to $\mathbb{R}$ whose level parameter controls how much weight is placed on the tail of the loss distribution. A common risk metric is the \emph{Value-at-Risk} ($\mathrm{VaR}_{\beta}$) at level $\beta \!\in\! [0,1)$, defined as the $\beta$-quantile of the loss distribution:
\[
  \mathrm{VaR}_{\beta}\!\left(\boldsymbol{Z}_{\varphi}(x^e_{0:H})\right)
    := \inf \left\{ \alpha \in \mathbb{R} :
       \mathbb{P}\!\left(\boldsymbol{Z}_{\varphi}(x^e_{0:H}) \le \alpha\right) \ge \beta
    \right\}.
\]
This risk metric captures the loss threshold exceeded only with probability $1-\beta$
but ignores the magnitude of losses beyond it.
In contrast, the \emph{Conditional Value-at-Risk} at level $\beta \!\in\! [0,1)$ is defined as
\[
  \mathrm{CVaR}_{\beta}\!\left(\boldsymbol{Z}_{\varphi}(x^e_{0:H})\right)
    := \inf_{\alpha \in \mathbb{R}}
       \!\big(\alpha + \frac{1}{1-\beta}
       \,\mathbb{E}\!\big[(\boldsymbol{Z}_{\varphi}(x^e_{0:H}) - \alpha)^{+}
       \big]\big),
\]
where $(\cdot)^{+} \!:=\! \max(\cdot,\, 0)$. CVaR averages the loss over the worst
$(1-\beta)$ fraction of outcomes
and hence penalizes the magnitude of tail losses; moreover, it is a coherent risk measure~\cite{rockafellar2000cvar}.
For notational brevity, we will use $\mathcal{R}$ interchangeably with $\mathrm{CVaR}_{\beta}$ in the remainder of this paper.

\subsection{Problem Statement}
\label{subsec:prob}
We say that an STL-rule lexicographic ordering over $\Phi =
(\varphi_1, \ldots, \varphi_N)$ is a strict priority relation if $\varphi_1 \succ \cdots \succ \varphi_N$,
where $\succ$ denotes ``higher priority than'', and if satisfying
a higher-priority rule strictly dominates any combination of
satisfactions and violations at lower priority levels. To assess rule satisfaction under the uncertainty in the predicted stochastic trajectory $\Wtraj$, each rule $\varphi_j \!\in\! \Phi$ is evaluated by its own risk metric $\mathcal{R}_j$ at confidence level $\beta_j$.

Given the ordered rule set $\Phi$, the per-rule risk metrics $\mathcal{R}_j$, the ego dynamics~\eqref{eq:ego_dynamics} initialized at $x^e_0$, and the predicted stochastic trajectory $\Wtraj$, our goal is to synthesize a receding-horizon trajectory planner for the ego vehicle that respects the lexicographic priority $\varphi_1 \succ \cdots \succ \varphi_N$ in the risk-aware sense.



\section{Planning under rule ordering and uncertainty} 
\label{sec:Planning under rule ordering and uncertainty}

\subsection{Risk-Extended Satisfaction}
As introduced in Section~\ref{subsec:risk_metrics}, for a fixed ego trajectory
$x^e_{0:H}$, the loss random variable $\boldsymbol{Z}_{\varphi_j}(x^e_{0:H})$
defined in~\eqref{eq:loss_rv} captures the uncertainty in the satisfaction
of rule $\varphi_j$ induced by the stochastic surrounding-object
trajectory $\Wtraj$.  Applying the
rule-specific risk metric $\mathcal{R}_j$, we define the \emph{robustness risk} of rule $\varphi_j$ as
\begin{equation}
\rhobar{j}(x^e_{0:H}) := -\mathcal{R}_j\!\left(\boldsymbol{Z}_{\varphi_j}(x^e_{0:H})\right),
\quad j = 1,\dots,N.
\label{eq:risk_aggregated_robustness}
\end{equation}
A positive value of
$\rhobar{j}(x^e_{0:H})$ indicates satisfaction of $\varphi_j$ in the risk-aware sense, whereas a negative value indicates a violation.
We denote the combined robustness risk vector,
\begin{equation}
\bar{\rho}(x^e_{0:H}) :=
\bigl(\rhobar{1}(x^e_{0:H}),\dots,\rhobar{N}(x^e_{0:H})\bigr)
\in \mathbb{R}^N.
\label{eq:risk_aggregated_vector}
\end{equation}  
\subsection{Rule Ordering}
\subsubsection{Trajectory Rank under Uncertainty}
To encode the lexicographic priority order among rules in the uncertain
setting, we quantitatively express the rule ordering $\Phi$ using a
rank function following~\cite[Def.~1]{veer2023receding}. Let
$\mathrm{step} : \mathbb{R} \!\to\! \{0,1\}$ map negative real numbers to
$0$ and all other real numbers to $1$. The rank
$r : \mathbb{R}^N \!\to\! \{1, \dots, 2^N\}$ applied to the robustness
risk vector $\bar{\rho}$ from~\eqref{eq:risk_aggregated_vector} is
defined as
\begin{equation}
  r(\bar{\rho}(x^e_{0:H})) := 2^N - \sum_{j=1}^{N} 2^{N-j}\,\mathrm{step}(\rhobar{j}(x^e_{0:H})).
  \label{eq:rank}
\end{equation}
A lower rank value indicates satisfaction of higher-priority rules:
rank $1$ corresponds to all rules being satisfied, while rank $2^N$
corresponds to all rules being violated.
\subsubsection{Rank-Preserving Reward Function}
\label{sec:Rank-Preserving}
 
The rank $r$ from~\cite[Def.~1]{veer2023receding} captures \emph{which} rules are
satisfied but not \emph{how well}. To also reward larger robustness margins
within the same rank, we adapt the scalar reward function from~\cite{veer2023receding} so that it (i) strictly reflects
the lexicographic priority ordering across ranks, and (ii) uses the robustness risk as a continuous tie-breaker within a rank:
\begin{equation}
J\!\left(\bar{\rho}(x^e_{0:H})\right)
  \!:=\! \sum_{j=1}^{N}\!\Bigl(
     a\cdot 2^{N-j+1}\,\mathrm{step}\bigl(\rhobar{j}(x^e_{0:H})\bigr)
     + \tfrac{1}{N}\rhobar{j}(x^e_{0:H})
     \Bigr).
\label{eq:rank_preserving_reward}
\end{equation}
We require $a>0$ to be such that the robustness risk values are bounded as
\begin{equation}
	\rhobar{j}(x^e_{0:H})\in[-a/2,\,a/2].
	\label{eq:boundedness}
\end{equation}

The boundedness condition in~\eqref{eq:boundedness} ensures that the continuous tie-breaking
term $\frac{1}{N}\rhobar{j}$ cannot overturn the discrete priority encoded by
the exponential step terms, and is enforced in practice via
normalization.
 
\begin{remark}
\label{rem:contribution}
Our reward~\eqref{eq:rank_preserving_reward} differs from~\cite[Rem.~1]{veer2023receding}
in that, for large $N$, it does not grow as $a^N$; when $a \gg 2$,
such exponential terms can become numerically large.
\end{remark}
 
\begin{proposition}[Lexicographic Order Preservation under Uncertainty]
\label{prop:rank_preserving}
Let $x^e_{0:H}$ and $x^{e\prime}_{0:H}$ be two ego trajectories whose
robustness risk vectors satisfy~\eqref{eq:boundedness}.
Then the reward $J$ in~\eqref{eq:rank_preserving_reward} preserves the
lexicographic rank ordering:
\[
r\!\left(\bar{\rho}(x^e_{0:H})\right)
  < r\!\left(\bar{\rho}(x^{e\prime}_{0:H})\right)
\;\implies\;
J\!\left(\bar{\rho}(x^e_{0:H})\right)
  > J\!\left(\bar{\rho}(x^{e\prime}_{0:H})\right).
\]
\end{proposition}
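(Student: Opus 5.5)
The plan is to rewrite $J$ as an affine function of the rank $r$ plus a bounded perturbation, and then to show that a rank gap of at least one outweighs that perturbation. Write $b_j := \mathrm{step}(\rhobar{j}(x^e_{0:H}))\in\{0,1\}$ and $b_j'$ for the analogous quantities of $x^{e\prime}_{0:H}$. Factoring $a\cdot 2^{N-j+1} = 2a\cdot 2^{N-j}$ out of the discrete part of \eqref{eq:rank_preserving_reward} and comparing with the definition of the rank \eqref{eq:rank} gives
\[
\sum_{j=1}^N a\, 2^{N-j+1} b_j \;=\; 2a \sum_{j=1}^N 2^{N-j} b_j \;=\; 2a\bigl(2^N - r(\bar{\rho}(x^e_{0:H}))\bigr).
\]
Hence
\[
J\bigl(\bar{\rho}(x^e_{0:H})\bigr) = 2a\bigl(2^N - r(\bar{\rho}(x^e_{0:H}))\bigr) + T(x^e_{0:H}), \qquad T(x^e_{0:H}) := \tfrac{1}{N}\sum_{j=1}^N \rhobar{j}(x^e_{0:H}),
\]
and the same decomposition holds for $x^{e\prime}_{0:H}$.

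Next I would bound the tie-breaking term. Under \eqref{eq:boundedness}, every $\rhobar{j}$ lies in $[-a/2,a/2]$. Since $T$ is an average of $N$ such values, it also lies in $[-a/2,a/2]$. The difference of the tie-breakers for the two trajectories therefore satisfies
\[
T(x^e_{0:H}) - T(x^{e\prime}_{0:H}) \ge -a.
\]

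Finally, the rank takes integer values, so the hypothesis $r(\bar{\rho}(x^e_{0:H})) < r(\bar{\rho}(x^{e\prime}_{0:H}))$ implies $r(\bar{\rho}(x^{e\prime}_{0:H})) - r(\bar{\rho}(x^e_{0:H})) \ge 1$. The discrete parts of $J$ then differ by at least $2a$, and combining this with the bound on $T$ yields
\[
J\bigl(\bar{\rho}(x^e_{0:H})\bigr) - J\bigl(\bar{\rho}(x^{e\prime}_{0:H})\bigr) \;\ge\; 2a - a \;=\; a \;>\; 0,
\]
using $a>0$. This is the claimed implication.

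The only delicate step is the identification of the exponent weights with the rank. One must check that the factor $2^{N-j+1}$ in \eqref{eq:rank_preserving_reward} gives exactly $2a$ times the binary weights $2^{N-j}$ in \eqref{eq:rank}. It is this factor of two that makes the margin $2a$ strictly exceed the worst-case tie-breaker swing $a$. Once that identification is in place, the rest of the argument is immediate.
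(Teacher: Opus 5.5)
Your proof is correct, but it takes a different route from the paper. The paper follows the template of Veer et al. It fixes the pivot index $k$, the highest-priority rule satisfied under $\bar\rho$ but violated under $\bar\rho'$, and notes that the step terms for $j<k$ coincide. It then bounds all lower-priority step terms of $J(\bar\rho')$ by the geometric sum $a(2^{N-k+1}-2)$ and absorbs the tie-breaker swing of at most $a$. You instead observe that the reward weights $a\cdot 2^{N-j+1}$ are exactly $2a$ times the binary weights in the rank, so $J = 2a\bigl(2^N - r\bigr) + T$ with $T\in[-a/2,a/2]$. The lexicographic structure is then already encoded in $r$, and integrality of $r$ does all the work.

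Your route has three advantages. It is shorter. It removes the index bookkeeping, in particular the agreement of the step values above $k$, which in the paper really follows from $r(\bar\rho)<r(\bar\rho')$ rather than from the minimality of $k$ alone. It also gives the sharper statement $J(\bar\rho)-J(\bar\rho')\ge a\bigl(2(r(\bar\rho')-r(\bar\rho))-1\bigr)$.

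The paper's pivot-index argument, in exchange, only needs each priority weight to dominate the sum of all lower weights plus the tie-breaker swing. It therefore carries over to reward designs whose exponential base differs from the rank's base $2$, e.g.\ the $a^{N-j+1}$-type weights the paper contrasts with in its remark on numerical growth. There $J$ is no longer affine in $r$, and your identification is unavailable.
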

 
\begin{proof}
See Appendix~\ref{app:proof}.
\end{proof}

\subsubsection{Planning problem}
The trajectory planning problem introduced in Section~\ref{subsec:prob}
can now be formalized as
\begin{align}
\max_{x^e_{0:H},\,u_{0:H-1}}
\quad & J\!\left(\bar{\rho}(x^e_{0:H})\right) & \label{eq:problem_statement}\\
\text{s.t.}\quad
& x^e_{k+1}=f(x^e_k,u_k),
& k=0,\ldots,H-1,\nonumber\\
& x^e_k\in\mathcal X,
& k=0,\ldots,H,\nonumber\\
& u_k\in\mathcal U,
& k=0,\ldots,H-1,\nonumber\\
& x^e_0 \text{ given}.\nonumber
\end{align}


\section{Approximate Risk-Aware Planning under Rule Ordering}
\subsection{Scenario Tree Approximation}
The problem defined in~\eqref{eq:problem_statement} is challenging to
solve directly because $\bar{\rho}(x^e_{0:H})$ in the objective depends
on the full distribution of the stochastic trajectory $\Wtraj$, which can be continuous.
To obtain a
finite approximation suitable for real-time planning, we
represent the distribution of $\Wtraj$ by a weighted scenario
set~\cite{chen2022branchmpc}, composed of $M$ samples as 
\[
  \mathcal{S} = \bigl\{(\wscen{i},\, p^i)\bigr\}_{i=1}^{M},
\]
where each $\wscen{i} \!=\! (x^{o,i}_0, \dots, x^{o,i}_H) \!\in\! (\Wspace)^{H+1}$
is a realization of the 
surrounding-object trajectory over the
planning horizon with associated probability $p^i \!\ge\! 0$ and
$\sum_{i\!=\!1}^{M} p^i \!=\! 1$.

\noindent\textbf{Per-scenario robustness. }
For a fixed ego trajectory $x^e_{0:H}$ and scenario $\wscen{i}$, the
composed signal realization is
\[
  s_{0:H}^i := \bigl((x^e_0, x^{o,i}_0),\, \dots,\, (x^e_H, x^{o,i}_H)\bigr).
\]
The robustness of rule $\varphi_j$ under scenario $i$ is then the
deterministic scalar
\begin{equation}
  \zij{i}{j} := \rho\bigl(\varphi_j,\, s_{0:H}^i,\, 0\bigr), \quad i = 1,\dots,M,\; j = 1,\dots,N,
  \label{eq:per_scenario_robustness}
\end{equation}
computed from the STL robustness semantics~\eqref{eq:stl_robustness_semantics}.
A positive value $\zij{i}{j} > 0$ indicates that $\varphi_j$ is satisfied
under scenario $i$; a negative value indicates a violation.

\noindent\textbf{Empirical CVaR approximation.}
Under the scenario approximation, the expectation in the CVaR formula
is replaced by a weighted sum over the $M$ scenarios. For a fixed ego
trajectory $x^e_{0:H}$, the empirical robustness risk of
rule $\varphi_j$ is 
\begin{equation}
  \hat \rho_j (x^e_{0:H})
  := - \inf_{\alpha \in \mathbb{R}}\!\left(
      \alpha + \frac{1}{1-\beta}
      \sum_{i=1}^{M} p^i\,
      \bigl(-\zij{i}{j} - \alpha\bigr)^{+}
    \right),
  \label{eq:empirical_cvar}
\end{equation}
for $j \!=\! 1, \dots, N$, where $(\cdot)^{+} \!:=\! \max(\cdot,\, 0)$ and
$\beta \!\in\! [0,1)$ is the CVaR confidence level. This follows directly
from substituting the weighted scenario sum for the expectation
in the CVaR definition of Section~\ref{subsec:risk_metrics}, with loss
values $-\zij{i}{j}$ in place of $\boldsymbol{Z}_{\varphi_j}(x^e_{0:H})$.
A positive value $\hat \rho_j(x^e_{0:H}) > 0$ indicates
satisfaction of $\varphi_j$; a negative value indicates a violation.
We denote the combined empirical robustness risk vector,
\begin{equation}
\hat \rho(x^e_{0:H}) :=
\bigl(\hat \rho_1(x^e_{0:H}),\dots,\hat \rho_N(x^e_{0:H})\bigr)
\in \mathbb{R}^N.
\label{eq:empirical_risk_aggregated_vector}
\end{equation}

\begin{remark}[Scenario Approximation Consistency]
\label{rem:scenario_approx}
The empirical CVaR in~\eqref{eq:empirical_cvar} converges to the true
CVaR of $\boldsymbol{Z}_{\varphi_j}(x^e_{0:H})$ as $M \!\to\! \infty$, provided the
scenarios are drawn i.i.d.\ from the true distribution of
$\Wtraj$~\cite{shapiro2009lectures}. The approximation quality depends
on how well the finite scenario set $\mathcal{S}$ captures the support
and weights of the true distribution. 
\end{remark}


\subsection{Model Predictive Path Integral (MPPI) Control}

To solve~\eqref{eq:problem_statement}, we employ
MPPI~\cite{williams2016aggressive,williams2017model}, a sampling-based
stochastic optimal control method particularly well suited to our setting.
At each receding-horizon step, MPPI draws $V$ independent control
perturbations, rolls each out through the ego dynamics~\eqref{eq:ego_dynamics},
scores the resulting trajectories via a cost function, and returns a
single control update via importance-weighted averaging, with temperature
parameter $\lambda > 0$ controlling the selectivity of the weighting.
The reward $J$ in~\eqref{eq:rank_preserving_reward} is inherently non-smooth:
STL robustness is built from nested $\min$/$\max$ operators, and the
rank-preserving reward contains a discontinuous step function. While smooth approximations of STL robustness exist~\cite{gilpin2021smooth}, they introduce approximation error that may both distort the
lexicographic ordering and compromise the soundness property of STL
robustness~\cite{donze2010robust}. MPPI avoids this issue entirely, as it requires
neither differentiability nor convexity and evaluates each rollout exactly. The $V$ control perturbations are
independently sampled and rolled out (Alg.~\ref{alg:mppi}, lines~5--6),
scored via the rank-preserving reward (lines~7--9), and combined into a
single control update via importance-weighted averaging (lines~11--12).
The central design choice lies in the rollout cost at
Alg.~\ref{alg:mppi}, line~9: by setting
$C^{(\ell)} = -J(\hat{\rho}^{(\ell)})$, the MPPI objective
directly encodes the priority-ordered robustness risk vector.

\begin{algorithm}[t]
\caption{MPPI Receding-Horizon Planner}
\label{alg:mppi}
\begin{algorithmic}[1]
\STATE \textbf{Input:} initial state $x^e_0$, nominal controls $u_{0:H-1}$,
       scenario set $\mathcal{S}$, number of samples $V$, temperature $\lambda$
\STATE \textbf{Output:} updated nominal control sequence $u_{0:H-1}$
\WHILE{planning is active}
  \FOR{each sample $\ell \!=\! 1, \dots, V$}
    \STATE Draw perturbations $\epsilon_k^{(\ell)} \sim \mathcal{N}(0,\Sigma)$,\;
           $k \!=\! 0,\dots,H-1$
    \STATE Roll out $v_k^{(\ell)} \!=\! u_k + \epsilon_k^{(\ell)}$
           through~\eqref{eq:ego_dynamics} $\;\!\to\!\;$ $(x^e_{0:H})^{(\ell)}$
    \STATE Evaluate per-scenario robustness $z^i_j$
           via~\eqref{eq:per_scenario_robustness}
    \STATE Compute the empirical robustness risk vector: $\hat \rho((x^e_{0:H})^{(\ell)})$ 
           via~\eqref{eq:empirical_risk_aggregated_vector}
  \STATE Compute rollout cost
        $C^{(\ell)} \leftarrow
        -J\!\left(\hat{\rho}\!\left((x^e_{0:H})^{(\ell)}\right)\right)$
  \ENDFOR
  \STATE Compute importance weights $\;\omega^{(\ell)} \propto
         \exp\!\bigl(-C^{(\ell)}/\lambda\bigr)$
  \STATE Update $\;u_k \leftarrow u_k + \sum_{\ell\!=\!1}^{V}
         \omega^{(\ell)}\epsilon_k^{(\ell)}$,\; $k\!=\!0,\dots,H-1$
  \STATE Apply $u_0$; shift horizon and warm-start $u_{0:H-1}$
\ENDWHILE
\end{algorithmic}
\end{algorithm}

\section{Simulation Results and Validation}
We validate our proposed approach on two autonomous driving scenarios: a
highway overtaking maneuver with a multimodal front-vehicle cut-in (Use
Case~A), and an unsignalized intersection turn decision under multimodal
oncoming-vehicle motion hypotheses (Use Case~B).

\subsection{Implementation Details}

\noindent\textbf{Dynamics.}
The ego vehicle is modeled by the kinematic bicycle model~\cite{kong2015kinematic}, a standard choice for autonomous driving~\cite{williams2017model}. The state is $x^e_k \!=\! (p^x_k,\, p^y_k,\, \psi_k,\, v_k)$, collecting the planar position, heading angle, and longitudinal speed, while the control input $u_k \!=\! (a_k,\, \delta_k)$ consists of the longitudinal acceleration and the front-wheel steering angle. The continuous-time kinematics
\begin{equation}
\dot{p}^x = v\cos\psi,\quad
\dot{p}^y = v\sin\psi,\quad
\dot{\psi} = \frac{v}{\ell}\tan\delta,\quad
\dot{v} = a,
\label{eq:bicycle}
\end{equation}
are discretized with sampling time $\Delta t$ via forward Euler integration.
We denote the ego planar position as $p_k \!:=\! (p^x_k,\, p^y_k)$ and
the planar position of the surrounding object as
$p^o_k \!:=\! (x^{o,x}_k,\, x^{o,y}_k)$ at step $k$.

\noindent\textbf{STL Rules.}
Table~\ref{tab:specifications} lists the STL specifications used across the two use cases. $\varphi_{\mathrm{safe}}$ enforces a minimum clearance $d_{\mathrm{safe}}$ from the surrounding object over the planning horizon, and $\varphi_{\mathrm{goal}}$ requires the ego vehicle to reach a goal region of radius $d_{\mathrm{goal}}$ around $g$ within the horizon. The Not-cross-dashed-lane-line rule $\varphi_{\mathrm{dash}}$ confines the lateral position to the lane boundaries, preventing dashed-lane crossings.
\begin{table}[!t]
\caption{STL rule specifications.}
\label{tab:specifications}
\centering
\footnotesize
\setlength{\tabcolsep}{3pt}
\renewcommand{\arraystretch}{1.4}
\begin{tabular}{@{}p{0.38\columnwidth} p{0.58\columnwidth}@{}}
\toprule
\textbf{Rule} & \textbf{STL Formula} \\
\midrule
Safety
& $\varphi_{\mathrm{safe}} \!=\! G_{[0,H]}(\|p_k-p^{o}_k\| \!\ge\! d_{\mathrm{safe}})$ \\[4pt]
Goal
& $\varphi_{\mathrm{goal}} = F_{[0,H]}(\|p_k-g\| \le d_{\mathrm{goal}})$ \\[4pt]
Not-cross-dashed-lane-line
& $\varphi_{\mathrm{dash}} = G_{[0,H]}(y_{\mathrm{low}}\le y_k \le y_{\mathrm{up}})$ \\
\bottomrule
\end{tabular}
\end{table}

\noindent\textbf{MPPI Parameters.}
Use Case~A uses a planning horizon of $H=16$, a sampling time of
$\Delta t=0.2\,\mathrm{s}$, a wheelbase of $\ell=2.7\,\mathrm{m}$,
$V_A=220$ sampled trajectories, temperature $\lambda=0.1$, and Gaussian
control perturbations with
$\sigma_a=1.1\,\mathrm{m/s^2}$ and
$\sigma_\delta=0.08\,\mathrm{rad}$. Use Case~B uses $H=24$,
$\Delta t=0.2\,\mathrm{s}$, and $\ell=1.2\,\mathrm{m}$, with
$V_B=240$ trajectories,
and $\lambda=0.08$. The perturbation standard deviations are
$\sigma_a=8.0\,\mathrm{m/s^2}$ and
$\sigma_\delta=0.055\,\mathrm{rad}$.
Both use cases employ the rank-preserving priority parameter $a=2.01$.

\noindent\textbf{Computational Platform.}
All experiments are run on a laptop equipped with an AMD Ryzen 9 8940HX CPU and an NVIDIA GeForce RTX~5060 GPU. The MPPI sample rollouts
are parallelized on the GPU following the implementation of the
MPPI-Generic CUDA library~\cite{vlahov2024mppi}, which assigns each of the
$V$ independent trajectory rollouts to a separate GPU thread, fully exploiting the parallel structure of MPPI.

\subsection{Use Case A: Highway Overtaking Under Uncertainty}
\label{subsec:highway}

\noindent\textbf{Scenario description.}
As illustrated in Fig.~\ref{fig:use_case_highway}, the ego vehicle starts in
the middle lane of a three-lane highway with the objective of reaching a
longitudinal goal at $x = 30.0\,\mathrm{m}$ in the same lane. A front vehicle initially occupying the lower lane may execute
a cut-in maneuver into the middle lane. The uncertainty over its future motion
is represented in this use case by five scenarios: one in which it keeps its
lane and four cut-in profiles of increasing aggressiveness. The choice of five scenarios is not restrictive; the framework supports any number of scenarios. The lateral position of each
cut-in profile evolves as a logistic sigmoid from the lower lane center
$(-1.6\,\mathrm{m})$ to the middle lane center $(1.6\,\mathrm{m})$. The keep-lane scenario $W^5$ is assigned probability $0.02$, while the
four cut-in profiles $W^1$, $W^2$, $W^3$, and $W^4$ are assigned probabilities $0.55$, $0.25$, $0.12$, and
$0.06$, respectively. In
practice, such a distribution can be produced by a multimodal motion prediction
model such as MTP~\cite{cui2019multimodal},
MultiPath~\cite{chai2020multipath}, or
CoverNet~\cite{phanminh2020covernet}. The safety distance and goal
tolerance are set to $d_{\mathrm{safe}}=3.0\,\mathrm{m}$ and
$d_{\mathrm{goal}}=1.0\,\mathrm{m}$, respectively. For the
Not-cross-dashed-lane-line rule $\varphi_{\mathrm{dash}}$, the lateral bounds are
$y_{\mathrm{low}}=0.1\,\mathrm{m}$ and
$y_{\mathrm{up}}=3.1\,\mathrm{m}$.


\noindent\textbf{Lexicographic and weighted-CVaR rewards.}
We compare two reward formulations using the same scenario distribution and
CVaR level $\beta=0.50$. The green ego uses the rank-preserving lexicographic
reward~\eqref{eq:rank_preserving_reward} with
$\varphi_{\mathrm{safe}} \succ \varphi_{\mathrm{goal}} \succ
\varphi_{\mathrm{dash}}$. The blue ego instead uses a weighted scalar reward.
The robustness of each rule $\varphi_j$ is first aggregated over the discrete
scenario distribution using lower-tail CVaR,
$\bar{\rho}^{\beta}_j=-\mathrm{CVaR}_{\beta}(-\rho_j)$. The weighted-CVaR reward is
then
\begin{equation}
  R_{\mathrm{wCVaR}}
  = w_{\mathrm{safe}}\bar{\rho}^{\beta}_{\mathrm{safe}}
  + w_{\mathrm{goal}}\bar{\rho}^{\beta}_{\mathrm{goal}}
  + w_{\mathrm{dash}}\bar{\rho}^{\beta}_{\mathrm{dash}}.
  \label{eq:weighted_cvar_reward}
\end{equation}
For this experiment, we set $w_{\mathrm{safe}}=10$,
$w_{\mathrm{goal}}=5$, and $w_{\mathrm{dash}}=8$. In this scenario, choosing
$w_{\mathrm{safe}}$ sufficiently large relative to the other weights can
produce behavior similar to that of the safety-first
lexicographic reward. The lexicographic reward directly encodes the priority and
thus avoids tuning these weights.

\begin{figure}[htbp]
    \centering
    \includegraphics[width=0.48\textwidth]{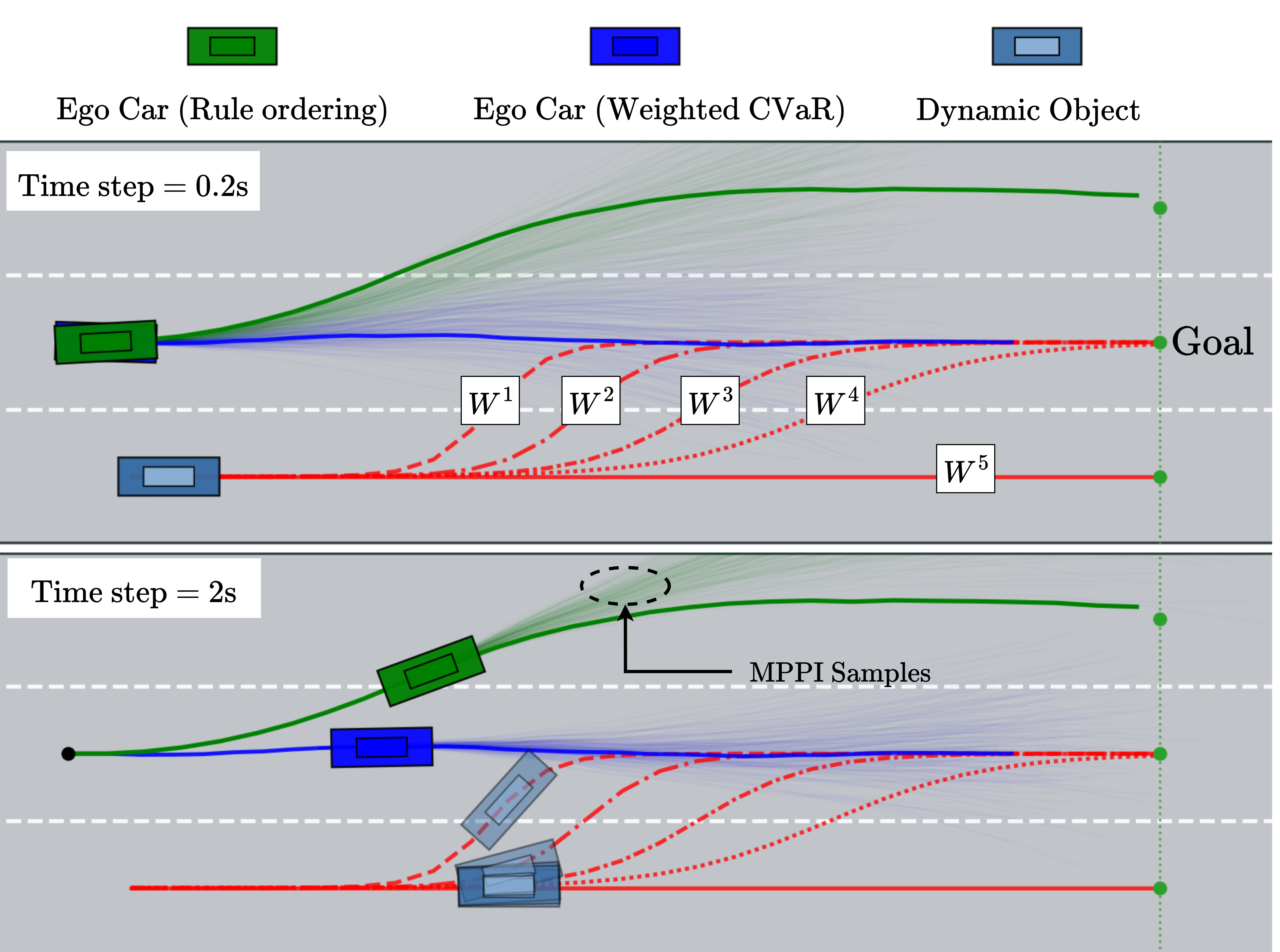}
    \caption{Highway overtaking scenario comparing the rank-preserving
    lexicographic reward (green) with the weighted-CVaR reward in
    \eqref{eq:weighted_cvar_reward} (blue) at $t=0.2\,\mathrm{s}$ and
    $t=2\,\mathrm{s}$. Red lines show the predicted trajectories of the
    surrounding vehicle.}
    \label{fig:use_case_highway}
\end{figure}

\noindent\textbf{Results.}
Fig.~\ref{fig:use_case_highway} shows two snapshots of the executed
trajectories at $t=0.2\,\mathrm{s}$ and $t=2\,\mathrm{s}$. The green ego uses
the safety-first lexicographic ordering
$\varphi_{\mathrm{safe}} \succ \varphi_{\mathrm{goal}} \succ
\varphi_{\mathrm{dash}}$. As described in Alg.~\ref{alg:mppi}, the MPPI
planner steers its sampling distribution toward trajectories with high reward
according to~\eqref{eq:rank_preserving_reward}. By $t=2\,\mathrm{s}$, the
green ego has crossed the upper dashed line, thereby violating
$\varphi_{\mathrm{dash}}$, while maintaining a distance of more than
$d_{\mathrm{safe}}$ from all predicted cut-in trajectories.

The blue ego uses the weighted-CVaR reward
in~\eqref{eq:weighted_cvar_reward} and remains in the middle lane, thereby
satisfying $\varphi_{\mathrm{dash}}$. The computation time per planning step is
$0.77\,\mathrm{ms}$, enabled by GPU parallelization across the 220 sampled
rollouts.

\subsection{Use Case B: Intersection Turn Decision}
\label{subsec:intersection}

\noindent\textbf{Scenario description.}
The second use case considers an ego vehicle approaching an unsignalized intersection from the south and deciding whether to initiate a left turn or wait for an oncoming vehicle. The ego vehicle receives two constant-velocity motion hypotheses of the oncoming vehicle, generated either through communication or by its own multimodal predictor. The first is a yielding hypothesis, shown on the left side of Fig.~\ref{fig:intersection}, with a speed of $v_{\mathrm{yield}}=0.35\,\mathrm{m/s}$. The second is a proceed-through hypothesis, shown on the right side of the same figure, with a speed of $v_{\mathrm{proceed}}=1.2\,\mathrm{m/s}$.
The ego vehicle starts at
$p_0=(0.9,-3.0)\,\mathrm{m}$ with zero speed.
The left-turn goal is at $g=(-3.8,0.9)\,\mathrm{m}$. The oncoming vehicle starts
at $p^o_0=(-0.9,4.4)\,\mathrm{m}$. For this use case, safety is assigned higher priority than goal progress, giving the lexicographic ordering
$\varphi_{\mathrm{safe}}\succ\varphi_{\mathrm{goal}}$.
The safety distance, goal tolerance, and CVaR confidence level are set to
$d_{\mathrm{safe}}=1.35\,\mathrm{m}$,
$d_{\mathrm{goal}}=0.8\,\mathrm{m}$, and
$\beta=0.8$, respectively.

\begin{figure}[htbp]
    \centering
    \includegraphics[width=\columnwidth]{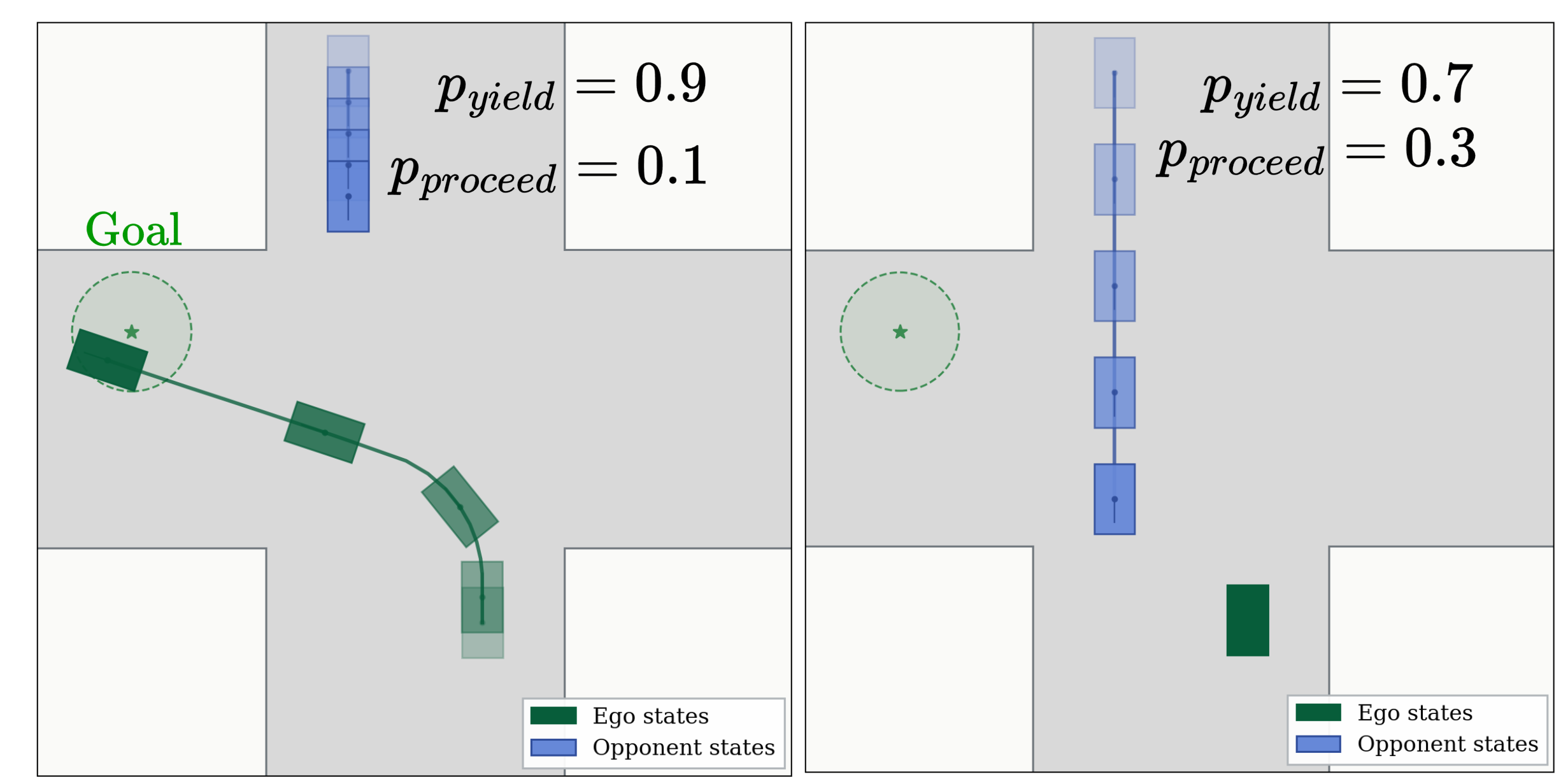}
\caption{Intersection scenario under two probability distributions over the oncoming vehicle's motion hypotheses. In the left panel, $(p_{\mathrm{yield}},p_{\mathrm{proceed}})=(0.9,0.1)$, and the ego vehicle proceeds with the left turn toward the goal region. In the right panel, $(p_{\mathrm{yield}},p_{\mathrm{proceed}})=(0.7,0.3)$; the increased probability that the oncoming vehicle proceeds through the intersection causes the safety-prioritized planner to yield.}
    \label{fig:intersection}
\end{figure}

\noindent\textbf{Results.}
Fig.~\ref{fig:intersection} illustrates how the uncertainty distribution over the possible motions of the oncoming vehicle influences the selected ego maneuver. The left side of the figure shows the case in which the ego vehicle proceeds with the left turn, whereas the right side shows the case in which it yields to the oncoming vehicle. When the hypothesis that the oncoming vehicle will yield is highly probable, with $(p_{\mathrm{yield}},p_{\mathrm{proceed}})=(0.9,0.1)$, the ego vehicle commits to the left turn and successfully reaches the goal region. In contrast, when the hypothesis that the oncoming vehicle will proceed through the intersection becomes more likely, with $(p_{\mathrm{yield}},p_{\mathrm{proceed}})=(0.7,0.3)$, the ego vehicle selects a yielding trajectory. Although the probability that the oncoming vehicle will proceed is only $0.3$, committing to the turn could result in a collision. Because the safety specification has higher priority in the lexicographic ordering, the ego vehicle sacrifices progress toward the goal to satisfy the higher-priority safety requirement.

The computation time for a single planning step is $2.49\,\mathrm{ms}$. This time includes the generation of the sampled rollouts, bicycle-model dynamics propagation, robustness evaluation, CVaR aggregation, reward computation, and the nominal-control update.

\section{Conclusion}
We presented a trajectory planning framework for autonomous vehicles
that enforces lexicographically ordered STL specifications under
multimodal uncertainty. By combining CVaR-based robustness risk with
a rank-preserving reward, our approach ensures that higher-priority
rules such as safety strictly dominate lower-priority objectives
in the risk-aware sense. The resulting planning problem is solved efficiently via an MPPI-based
receding-horizon planner, which we validate on a highway overtaking scenario and an
unsignalized intersection turn-decision scenario. Several directions remain open for future work, including extending the priority structure to encode ethical considerations, incorporating intention-aware behavioral prediction, and validating on real-world driving benchmarks.

\small
\setlength{\baselineskip}{0.9\baselineskip}

\bibliographystyle{IEEEtran}
\bibliography{ref}

\appendix[Proof of Proposition~\ref{prop:rank_preserving}]
\label{app:proof}

The proof follows the structure of~\cite[Appendix]{veer2023receding},
adapted to the reward $J$ in~\eqref{eq:rank_preserving_reward} with base~$2$.

\textit{Setup.}
Let $\bar{\rho}$ and $\bar{\rho}'$ denote the robustness risk vectors of
$x^e_{0:H}$ and $x^{e\prime}_{0:H}$, respectively, with
$r(\bar{\rho}) < r(\bar{\rho}')$.
By definition of rank, there exists a highest-priority index at which
$\bar{\rho}$ satisfies a rule that $\bar{\rho}'$ does not. Define
\[
k := \min\bigl\{j \mid
     \mathrm{step}(\rhobar{j}) > \mathrm{step}(\rhobar{j}'),\;
     j\in\{1,\dots,N\}\bigr\},
\]
so that $\mathrm{step}(\rhobar{k})\!=\!1$ and $\mathrm{step}(\rhobar{k}')\!=\!0$.
By minimality of $k$, the step values agree for all $j < k$, i.e.,
$\mathrm{step}(\rhobar{j}) \!=\! \mathrm{step}(\rhobar{j}')$ for $j \!=\! 1,\dots,k-1$.

\textit{Decomposition.}
Define $b \!:=\! \sum_{j\!=\!1}^{k-1} a\cdot 2^{N-j+1}\,\mathrm{step}(\rhobar{j})$.
Since the step values agree for $j < k$, this constant $b$ is identical
for both $\bar{\rho}$ and $\bar{\rho}'$, and the rewards decompose as
\begin{align*}
J(\bar{\rho})
  &= b + a\cdot 2^{N-k+1}
   + \sum_{j=k+1}^{N} a\cdot 2^{N-j+1}\,\mathrm{step}(\rhobar{j})
   + \tfrac{1}{N}\sum_{j=1}^{N}\rhobar{j},\\
J(\bar{\rho}')
  &= b + 0
   + \sum_{j=k+1}^{N} a\cdot 2^{N-j+1}\,\mathrm{step}(\rhobar{j}')
   + \tfrac{1}{N}\sum_{j=1}^{N}\rhobar{j}'.
\end{align*}
It remains to show $J(\bar{\rho}') < J(\bar{\rho})$,
which we establish via two claims.

\textbf{Claim 1:}
$\displaystyle\sum_{j\!=\!k+1}^{N} a\cdot 2^{N-j+1}\,\mathrm{step}(\rhobar{j}')
 < a\cdot 2^{N-k+1} - a.$

Since $\mathrm{step}(\cdot) \!\le\! 1$, we bound the sum by dropping the
step function and evaluating the geometric series exactly:
\[
\sum_{j=k+1}^{N} a\cdot 2^{N-j+1}\,\mathrm{step}(\rhobar{j}')
\;\leq\;
a\sum_{j=k+1}^{N} 2^{N-j+1}
\;=\; a\bigl(2^{N-k+1} - 2\bigr).
\]
Since $a > 0$, we have $a(2^{N-k+1}-2) < a(2^{N-k+1}-1)$,
which completes the proof of Claim~1.

\textbf{Claim 2:}
$\displaystyle\tfrac{1}{N}\sum_{j\!=\!1}^{N}\rhobar{j}' - a
 \;\leq\; \tfrac{1}{N}\sum_{j\!=\!1}^{N}\rhobar{j}.$

By the boundedness assumption~\eqref{eq:boundedness}, each
$\rhobar{j}, \rhobar{j}' \!\in\! [-a/2,\, a/2]$. Therefore
$\tfrac{1}{N}\sum_{j}\rhobar{j}' \leq a/2$ and
$\tfrac{1}{N}\sum_{j}\rhobar{j} \geq -a/2$,
so the difference satisfies
$\tfrac{1}{N}\sum_{j}\rhobar{j}' - \tfrac{1}{N}\sum_{j}\rhobar{j} \leq a$,
which is exactly Claim~2.

\textit{Conclusion.}
Applying Claim~1 to bound the lower-priority step terms of $J(\bar{\rho}')$,
and then Claim~2 to handle the tie-breaking terms, gives
\begin{align*}
J(\bar{\rho}')
  &\leq b + a\bigl(2^{N-k+1}-2\bigr)
         + \tfrac{1}{N}\sum_{j=1}^{N}\rhobar{j}'
         & &\text{(Claim 1)}\\
  &\leq b + a\bigl(2^{N-k+1}-2\bigr)
         + \tfrac{1}{N}\sum_{j=1}^{N}\rhobar{j} + a
         & &\text{(Claim 2)}\\
  &=    b + a\cdot 2^{N-k+1} - a
         + \tfrac{1}{N}\sum_{j=1}^{N}\rhobar{j}\\
  &<    b + a\cdot 2^{N-k+1}
         + \tfrac{1}{N}\sum_{j=1}^{N}\rhobar{j}
  \;=\; J(\bar{\rho}),
\end{align*}
which completes the proof.\qed

\end{document}